\newtheorem{proposition}{Proposition}
\title{\LARGE \bf Representing Multi-Robot Structure through Multimodal Graph Embedding for the Selection of Robot Teams}
\author{
    Brian Reily$^{1}$, Christopher Reardon$^{2}$, and Hao Zhang$^{1}$%
    \thanks{*This work was partially supported by ARL DCIST CRA W911NF-17-2-0181, ARO W911NF-17-1-0447, and NSF CNS-1823245.}%
    \thanks{$^{1}$Brian Reily and Hao Zhang are with the Human-Centered Robotics Laboratory
    at the Colorado School of Mines, Golden, CO, 80401, USA.
    email: \{breily, hzhang\}@mines.edu}%
    \thanks{$^{2}$Christopher Reardon is with the U.S. Army Research Laboratory in Adelphi, MD, 20783, USA.
    email: christopher.m.reardon3.civ@mail.mil}%
}
\begin{document}
\maketitle
\thispagestyle{empty}
\pagestyle{empty}

%%%%
\begin{abstract}
%%%%

Multi-robot systems of increasing size and complexity
are used to solve large-scale problems, such as area exploration and search and rescue.
A key decision in human-robot teaming is dividing a multi-robot system into teams to address
separate issues or to accomplish a task over a large area.
In order to address the problem of selecting teams in a multi-robot system,
we propose a new multimodal graph embedding method
to construct a unified representation
that fuses multiple information modalities to describe and divide a multi-robot system.
The relationship modalities are encoded as directed graphs that can encode
asymmetrical relationships,
which are embedded into a unified representation for each robot.
Then, the constructed multimodal representation is used to determine teams
based upon unsupervised learning.
We perform experiments to evaluate our approach on
expert-defined team formations,
large-scale simulated multi-robot systems, and a system of physical robots.
Experimental results show that
our method successfully decides correct teams based on
the multifaceted internal structures describing multi-robot systems,
and outperforms baseline methods based upon only one mode of information,
as well as other graph embedding-based division methods.

%%%%
\end{abstract}
%%%%

%%%%
\section{Introduction}
%%%%

Because of their robustness and flexibility,
multi-robot systems are being increasingly researched and used in large-scale applications,
such as disaster response \cite{erdelj2017wireless},
search and rescue \cite{kruijff2012experience},
and area exploration \cite{bayat2017environmental}.
Accomplishing complex operations using multi-robot systems typically
requires the division of a system into effective teams that are capable of achieving multiple
separate tasks simultaneously
or accomplishing mission objectives over a large
area \cite{khamis2015multi}.
However, as the number of robots in a system increases,
it becomes
cognitively more difficult for humans to understand and command
\cite{bales2017neurophysiological}.

\begin{figure}[t]
    \centering
    \includegraphics[width=0.47\textwidth]{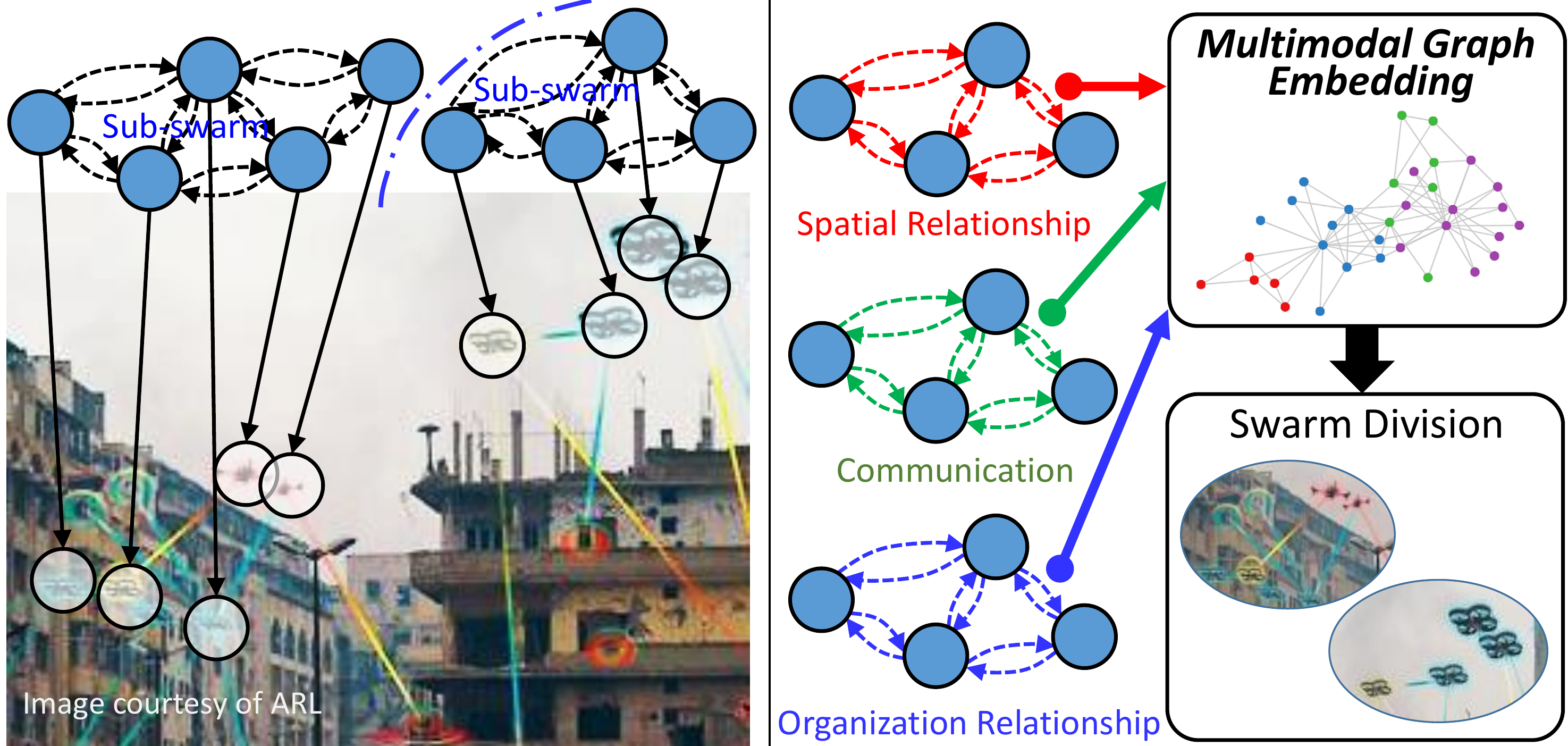}
    \vspace{-8pt}
    \caption{A motivating example of automatic multi-robot division
    and our solution based on multimodal graph embedding.
    In real-world multi-robot systems, robots typically have a variety of relationships,
    such as spatial relationships,
    communication connectivity, and organization hierarchy.
    Our proposed multimodal graph embedding approach
    can integrate multiple relationship graphs and identify effective multi-robot teams.
    }
    \label{fig:overview}
\end{figure}

Successful division of a large multi-robot system into effective teams
is a
particularly difficult problem as the size and complexity
of the interaction between robots increases.
At scale, the complexities of internal relationships become difficult
for human operators to conceptualize and integrate \cite{velagapudi2008scaling}.
Figure \ref{fig:overview} provides an illustration of how robots can appear organized in physical space,
but also contain hierarchical relationships
or communication capabilities within a system that are more difficult to
perceive.
These relationships are further complicated by robot interactions with obstacles and the
surrounding environment.
When combined, these challenges result in robotic system states that are both difficult for
a human operator to accurately perceive and
for a system to display \cite{lewis2010teams}.
Thus, effective representations of multi-robot structure are critical for
understanding and selecting team divisions.

%%%%%
Due to the importance of multi-robot teams, division approaches have
been widely studied in the literature.
A commonly used paradigm approached the division of a multi-robot
system based solely on spatial locations of members,
assigning territories to each
robot \cite{schneider1998territorial,rekleitis2004limited}.
These methods have the drawbacks of relying on spatial
information only and assigning agents to individual territories,
without the ability to identify teams or grouped divisions.
Another paradigm views multi-robot division as a task
allocation problem, e.g., based on
task sequencing \cite{castillo2014temporal,brutschy2014self},
game theory \cite{jang2018anonymous,jang2018comparative},
and Markov chains \cite{jang2017bio}.
These techniques require
explicit knowledge about the tasks,
such as task sequences, requirements, or priorities,
as well as knowledge of opponent behaviors,
which may not be available to a human operator.
In addition,
methods were also implemented
that are inspired by biological swarms such as
ants \cite{wu2018modeling,labella2006division}
or wasps \cite{bonabeau1997adaptive}.
These techniques often assume simple agents,
often without the ability to communicate,
and are not capable of creating a complete representation of
multi-robot structure by addressing multiple relationship modalities.

% high-level description of proposed approach
To address the selection of multi-robot teams,
we propose a novel multimodal graph embedding approach
to encode diverse relationships of multiple robots as graphs
and integrate these multiple graphs
into a unified representation that is applied to
divide a multi-robot system into teams.
We model each internal relationship of the robots
using a directed graph
as an information modality.
Given a set of member relationships,
we construct multiple graphs that are applied as the input to our approach.
Then, we propose a new multimodal Katz index to integrate
multiple graphs of robot relationships and embed them into a
unified representation for each robot.
Then, the constructed representation is used to identify multi-robot teams
through unsupervised learning.
Our multimodal graph-embedded robot division approach is capable of
fusing diverse robotic relationships
and identifying team divisions without requiring explicit knowledge of tasks.

% contributions
This paper has two major contributions.
\begin{enumerate}
\item First, we propose a novel multimodal graph embedding approach
that formulates robot team selection as a graph-based embedding and clustering problem,
models multiple asymmetrical relationships of robots using directed graphs,
and fuses them into a unified representation to identify robotic teams.
\item Second, we perform a comprehensive evaluation of the proposed method in
the scenarios using expert-defined team formations,
large-scale simulated multi-robot systems, and
teams of physical robots.
We validate that our multimodal graph embedding approach is effective in identifying
robotic teams, and obtains superior performance over existing graph-based division methods.
\end{enumerate}

%%%%
\section{Related Work} \label{sec:related}
%%%%

In this section, we review previous approaches to multi-robot division,
representation of multi-robot systems as graphs,
and existing graph embedding methods.

\subsection{Multi-Robot Division}

Dividing multi-robot systems to accomplish
multiple objectives simultaneously
has been approached previously from three main perspectives:
spatial-based division, task allocation, and biologically inspired
methods.

Spatial-based division was implemented by assigning robots to their
own territories \cite{schneider1998territorial,rekleitis2004limited}.
It was typically formulated as an optimization problem, where a group of robots
diffuses to optimize coverage of an area \cite{li2017decentralized}.
This spatial-based perspective has two main drawbacks.
First, it relies solely on the spatial locations of robots, omitting
other information available that could be utilized to better divide
the system.
Second, it assigns members to areas individually, and is unable to divide
a multi-robot system into teams, each still capable of working together.

Task allocation was implemented for multi-robot division in a variety of ways.
Explicit assignment was designed based on sub-goals \cite{ephrati1994divide},
or the temporal relationships between tasks % (i.e., if they must occur in a set sequence)
\cite{castillo2014temporal,brutschy2014self}.
Markov chains were also developed to schedule task
assignments \cite{jang2017bio}.
Agents were designed to negotiate with each other for
sub-tasks \cite{zlot2002multi}, and game theory was applied to
find optimal task assignments \cite{jang2018anonymous,jang2018comparative}.
Multi-robot division based on task allocation has the major drawback
of requiring explicit information about the task.
This can range from requiring task dependency information in order
to schedule sequences, task composition in order to assign
sub-tasks, or task priorities and requirements in order to determine
the best robot to assign the task to.
Many approaches also assign tasks to individual robots, as opposed to
tasking groups to accomplish a single task.

Methods inspired by biological swarms were also designed for
multi-robot division, inspired by how insects divide into sub-groups.
Methods were introduced based on ant foraging \cite{labella2006division}
and division of labor in wasp colonies \cite{bonabeau1997adaptive}
and ant colonies \cite{wu2018modeling}.
Biological approaches generally
assume swarm agents are extremely limited in their ability to
sense, communicate, and process information.
Previous bio-inspired methods cannot incorporate multimodal relationships of swarm members.

\subsection{Graph Representations of Multi-Agent Systems}

Many previous works used graphs to represent multi-agent systems,
by representing agents in the team with
vertices and relationships between agents as edges.
Control laws were first applied as edges between agents, where these
laws controlled spacing between agents and were applied to construct group
trajectories around obstacles \cite{desai2001modeling,desai2001modeling2,desai2002graph}.
Graphs that described multiple nearest neighbors through control laws
were applied in \cite{li2005stability} and \cite{gennaro2005formation}.
Communication channels between robots were represented graphically in \cite{fierro2002modular}.
Graph rigidity was used in
\cite{olfati2002graph,olfati2003cooperative} in
an optimization-based method to perform split and
rejoin maneuvers around obstacles.

Methods to find communities in graphs are mainly based on analyzing edges between vertices.
Cut-based methods cut the graph to form the best two
clusters, and include min-max cuts \cite{ding2001min} and
normalized cuts \cite{shi2000normalized}.
Communities based on the probability of new links were used in \cite{newman2001clustering},
later extended with the idea of modularity \cite{newman2004finding,newman2006modularity},
which constructs a segmentation metric and uses this to create divisions.
%Similar to this, \cite{xu2007scan} introduced a structural clustering algorithm
%to not just identify communities but to additionally distinguish hubs and %outlier nodes.
These community-finding algorithms are unable to operate on
multiple graphs, and often cannot divide a graph into an arbitrary
number of communities but instead rely on the structure
of the graph
to define the number of communities.

\subsection{Graph Embedding}

Graph embedding is the representation of graph structure in vector space,
allowing typical machine learning techniques to be applied
\cite{goyal2018graph,cai2018comprehensive,cui2018survey}.
Techniques including node2vec \cite{grover2016node2vec} employ random walks,
representing nodes based upon their transition probabilities to and from
other nodes.
%(i.e., nodes would have similar representations if transitions from each commonly went to the same set of nodes).
%Deep learning was applied in SDNE \cite{wang2016structural},
%which uses auto-encoders to learn a node representation.
Methods based on matrix descriptions of graphs, such as adjacency matrices and
similarity matrices, include structure preserving embedding \cite{shaw2009structure},
graph factorization \cite{ahmed2013distributed},
and High-Order Proximity preserved Embedding (HOPE) \cite{ou2016asymmetric}.
Most graph embedding techniques focused on single graphs
representation, e.g., to encode social networks or academic citation networks,
without the ability to integrate multiple relationships.
The effectiveness of graph embedding for multi-robot division has also not
well studied in the literature.

%%%%
\section{Approach}
\label{sec:approach}
%%%%

We propose a new multimodal graph embedding approach to integrate multiple
directed graphs
that encode relationships in a multi-robot system
into a unified representation
that is used to identify multi-robot teams.

\subsection{Multimodal Robotic Structure Embedding}

Given a specified relationship (e.g., communication connectivity)
of a multi-robot system including $N$ members,
we represent a relationship among robots as a
graph $\mathcal{G} = ( \mathcal{V}, \mathcal{E} )$, where
$\mathcal{V} = \{ v_1, \dots, v_N \}$ denotes the
set of vertices, each corresponding to a robot,
and $\mathcal{E}$ denotes the
set of directed edges between these vertices.
The direction and weight of each edge $e_{ij} = ( v_i, v_j ) \in \mathcal{E}$
depends on the type of relation the graph is representing.

In real-world deployment,
robotic members within a system typically have multiple various relationships
(e.g., spatial relationships, communication connectivity, and organization hierarchy).
When multiple robotic relationships are available,
we encode the system with $M$ graphs, where $\mathcal{G}_m$ is the graph
describing the $m$-th relationship between robots.
Each graph $\mathcal{G}_m$ is described by an adjacency matrix
$\mathbf{A}_m \in {\mathcal{R}}^{N \times N}$, where each element
$a_{ij}$ is the weight of the edge connecting vertex $v_i$ to
vertex $v_j$.
The proposed method is able to represent both directed (e.g. communication from one
member to others) and undirected (e.g., spatial distances) relationships.
In the case of representing undirected relationships, the weight
of edges satisfies $a_{ij} = a_{ji}$.

To represent graphs in vector space (i.e., graph embedding),
the Katz index \cite{katz1953new} has shown to be a promising method
that has been used to address real-world graph problems.
The Katz index represents the similarity of a pair of vertices given a graph,
by summing the weights of edges along all paths between the two vertices
to represent how closely connected the two vertices are.
Given a decay parameter
$\alpha>0$ that weights paths based on their length $L$, the Katz index can produce
a similarity matrix $\mathbf{S}$ to describe a graph:
\begin{equation}
\label{eq:katz}
\mathbf{S} = \sum_{l = 1}^{L} \alpha^{l} \mathbf{A}^{l}
\end{equation}

This equation can be rewritten as
\begin{equation}
\label{eq:katz2}
\mathbf{S} = {( \mathbf{I} - \alpha \mathbf{A} )}^{-1} - \mathbf{I}
\end{equation}
where $\mathbf{I}$ is the identity matrix.

To achieve our objective of encoding multiple graphs
and embedding them into a single vector representation,
we propose a new multimodal Katz index
that is able to take multiple graphs as the multimodal input modalities
and form a single similarity matrix
$\mathbf{S} \in {\mathcal{R}}^{N \times N}$ that integrates the information
of all graphs.
To do this,
we implement an adjacency matrix $\mathbf{A}_m$ for each graph $\mathcal{G}_m$,
where $m = 1, \dots, M$, and introduce
a weight $w_m$ for each $\mathbf{A}_m$ describing the importance
of the relationship encoded by $\mathcal{G}_m$, where
$1 = \sum_{m=1}^{M} w_m$.
Values of $w_m$ can be specified by human experts to incorporate prior knowledge
and preferences, or estimated by hyperparameter selection methods (e.g., grid search).
We then construct the multimodal similarity matrix $\mathbf{S}$
that embeds information of all graphs as follows:
\begin{equation}
\label{eq:katz_weighted}
\mathbf{S} = {\left( \mathbf{I} - \alpha \sum_{m = 1}^{M} w_m \mathbf{A}_m \right)}^{-1} - \mathbf{I}
\end{equation}

\begin{proposition}\label{proposition}
$\left( \mathbf{I} - \alpha \sum_{m=1}^M w_m \mathbf{A}_m \right)$ is invertible
when  $\alpha^{-1}$ is greater than the largest eigenvalue of $\mathbf{A}^{*}$, where $\mathbf{A}^{*} = \sum_{m=1}^M w_m \mathbf{A}_m$.
\end{proposition}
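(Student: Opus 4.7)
The plan is to reduce the invertibility question to a statement about the eigenvalues of $\mathbf{A}^{*}$ and then exploit the spectral-radius hypothesis via the standard Neumann series argument, which is already implicit in the paper's passage from Equation~(\ref{eq:katz}) to Equation~(\ref{eq:katz2}).

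First, I would observe that since $\mathbf{A}^{*} = \sum_{m=1}^{M} w_m \mathbf{A}_m$ is a fixed $N \times N$ matrix, the matrix in question is simply $\mathbf{I} - \alpha \mathbf{A}^{*}$. A square matrix is invertible precisely when $0$ is not one of its eigenvalues. If $\lambda$ is an eigenvalue of $\mathbf{A}^{*}$ with eigenvector $\mathbf{x}$, then $(\mathbf{I} - \alpha \mathbf{A}^{*})\mathbf{x} = (1 - \alpha \lambda)\mathbf{x}$, so the eigenvalues of $\mathbf{I} - \alpha \mathbf{A}^{*}$ are exactly $\{1 - \alpha \lambda : \lambda \in \mathrm{spec}(\mathbf{A}^{*})\}$. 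Invertibility therefore reduces to showing $\alpha \lambda \neq 1$ for every eigenvalue $\lambda$ of $\mathbf{A}^{*}$.

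Next I would interpret the phrase ``largest eigenvalue'' as the spectral radius $\rho(\mathbf{A}^{*}) = \max_{\lambda \in \mathrm{spec}(\mathbf{A}^{*})} |\lambda|$, which is the natural reading since the graphs $\mathcal{G}_m$ may be directed and thus $\mathbf{A}^{*}$ need not be symmetric (so eigenvalues can be complex). Under the hypothesis $\alpha^{-1} > \rho(\mathbf{A}^{*})$, every eigenvalue $\lambda$ of $\mathbf{A}^{*}$ satisfies $|\alpha \lambda| = \alpha |\lambda| \le \alpha \rho(\mathbf{A}^{*}) < 1$, hence $1 - \alpha \lambda \neq 0$. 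This gives invertibility directly. As a bonus observation linking back to the paper, this same bound makes the Neumann series $\sum_{l=0}^{\infty} (\alpha \mathbf{A}^{*})^{l}$ converge, and its limit equals $(\mathbf{I} - \alpha \mathbf{A}^{*})^{-1}$; this is precisely why the closed-form similarity matrix in Equation~(\ref{eq:katz_weighted}) is well-defined and agrees with the path-sum interpretation underlying the Katz index.

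The main subtlety, rather than the main obstacle, is simply fixing the meaning of ``largest eigenvalue'' in the directed (non-symmetric) setting; once we commit to the spectral-radius reading, the proof is a two-line eigenvalue argument and requires no nontrivial calculation. I would keep the Neumann-series remark in the write-up because it both motivates the hypothesis and justifies the rewriting already performed between Equation~(\ref{eq:katz}) and Equation~(\ref{eq:katz2}).
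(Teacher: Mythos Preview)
Your proposal is correct and follows essentially the same route as the paper: both arguments reduce invertibility of $\mathbf{I}-\alpha\mathbf{A}^{*}$ to the statement that $\alpha^{-1}$ is not an eigenvalue of $\mathbf{A}^{*}$, the paper via $\det(\mathbf{I}-\alpha\mathbf{A}^{*})=0 \Leftrightarrow \det(\mathbf{A}^{*}-\alpha^{-1}\mathbf{I})=0$ and you via the eigenvalue shift $\lambda\mapsto 1-\alpha\lambda$. Your version is slightly more careful in reading ``largest eigenvalue'' as the spectral radius (needed since $\mathbf{A}^{*}$ may be non-symmetric) and in connecting the hypothesis to the Neumann series underlying Equations~(\ref{eq:katz})--(\ref{eq:katz2}), but the core argument is the same.
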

\begin{proof}
For $\left( \mathbf{I} - \alpha \mathbf{A}^{*} \right)$ to not be invertible,
$\operatorname{det} \left(\mathbf{I} - \alpha \mathbf{A}^{*} \right) = 0$.
This is equivalent to $\operatorname{det} \left( \mathbf{A}^{*} - \alpha^{-1}\mathbf{I}\right) = 0$.
This determinant is only zero if $\alpha^{-1}$ is equal to the largest
eigenvalue of $\mathbf{A}^{*}$, denoted as $e_1$.
When $\alpha < \frac{1}{e_1}$, the inverse is valid.
\end{proof}

\begin{algorithm}[tb]
\SetAlgoLined
\SetKwInOut{Input}{Input}
\SetKwInOut{Output}{Output}
\SetNlSty{textrm}{}{:}
\SetKwComment{tcc}{/*}{*/}

\Input{
$\mathcal{G}_m = ( \mathcal{V}, \mathcal{E} )$ and $C$
}
\Output{
$\mathbf{X}$ and $\mathbf{x}_n^{( c )}$
}

\BlankLine

\For{$m \leftarrow 1$ \KwTo $M$}{
Construct adjacency matrix $\mathbf{A}_m$ from $\mathcal{G}_m$.
}

Calculate similarity matrix $\mathbf{S}$ by Equation (\ref{eq:katz_weighted}),
    integrating each $\mathbf{A}_m$ based on its weight $w_m$.

Perform SVD to obtain left singular vectors $\mathbf{U}$ and right singular vectors
    $\mathbf{V}^T$.

Construct multi-robot representation matrix $\mathbf{X}$ from the first $K$
    columns of $\mathbf{U}$ and the first $K$ columns of $\mathbf{V}$.
    %where $K = \frac{D}{2}$.

Randomly select $C$ cluster centroids $\mathbf{\Phi}$.

\Repeat{
Labels have converged
}{
Assign each robot representation $\mathbf{x}_n$ a label $\mathbf{x}_n^{( c )}$
            based on the nearest cluster centroid $\boldsymbol{\phi}_c$.

Update cluster centroids $\mathbf{\Phi}$ based on the labels.
}
\Return $\mathbf{X}$ and $\mathbf{x}_n^{( c )}$

\caption{Graph Embedding for Multi-Robot Div.}
\label{alg:division}
\end{algorithm}

In order to create a lower-dimensional representation than $\mathbf{S}$,
which is necessary when embedding big graphs of a large-scale multi-robot system,
we perform Singular
Value Decomposition (SVD) \cite{golub1970singular} to factor $\mathbf{S}$:
\begin{equation}
\label{eq:svd}
\mathbf{S} = \mathbf{U} \mathbf{\Sigma} \mathbf{V}^T
\end{equation}
The columns of $\mathbf{U}$ are the left singular vectors of
$\mathbf{S}$, the rows of $\mathbf{V}^T$ are the right singular
vectors, and $\mathbf{\Sigma}$ is a diagonal matrix whose entries are
the singular values.
As $\mathbf{S}$ is a square $N \times N$ matrix, all of these are $\in
{\mathcal{R}}^{N \times N}$ when a full SVD is performed.
The columns of $\mathbf{U}$ are also the eigenvectors of
$\mathbf{S}\mathbf{S}^T$, and similarly the rows of $\mathbf{V}^T$
are the eigenvectors of $\mathbf{S}^T \mathbf{S}$.

To further reduce the dimensionality of the representation, given
a desired dimensionality $D$,
we propose to use the
first $K$ left singular vectors and the first $K$ right singular vectors
to approximate $\mathbf{S}$,
where $K = D / 2$, half of the desired dimensionality.
As we only use the first $K$ singular vectors on each side, we can perform
a reduced SVD of $\mathbf{S}$.
This results in $\mathbf{U} \in {\mathcal{R}}^{N \times K}$ and
$\mathbf{V} \in {\mathcal{R}}^{N \times K}$.

Finally, we construct the final representation matrix $\mathbf{X} \in {\mathcal{R}}^{N \times D}$
for all $N$ robots by combining
$\mathbf{U}$ and $\mathbf{V}$:
\begin{equation}
\mathbf{X} = \left( \mathbf{U}, \mathbf{V} \right)
\end{equation}
where each row $\mathbf{x}_n \in \mathcal{R}^{D}$ of $\mathbf{X}$
represents the $n$-th robot, which is then used for division.

\begin{comment}
\begin{figure}[htb]
    \centering
    \includegraphics[width=0.45\textwidth]{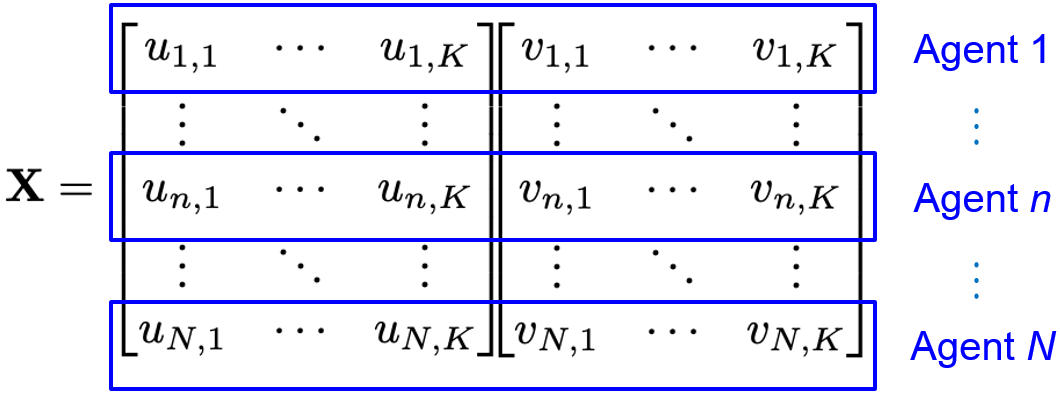}
    \vspace{-8pt}
    \caption{Illustration of the representation constructed for each
    robot.
    The unified representation matrix $\mathbf{X}$ is formed through the combination
    of $\mathbf{U}$ and $\mathbf{V}$.
    Each robot's representation consists of $K$ values from $\mathbf{U}$
    and $K$ values from $\mathbf{V}$.}
    \label{fig:representation}
\end{figure}
\end{comment}

\subsection{Selection of Multi-Robot Teams}

Given the multimodal graph embedding results,
we propose to perform selection of multi-robot teams
through unsupervised learning (i.e., clustering).
Specifically, given
$\mathbf{X} = \{ \mathbf{x}_1; \dots; \mathbf{x}_N \}$,
and given the number $C$ of
robot teams the human operator needs to accomplish the mission,
the goal is to assign each robot a cluster label
$\mathbf{x}_n^{( c )}$ that indicates it belongs to team $c$,
thus dividing the robotic system into $C$ total teams.

This unsupervised learning problem can be formulated by:
\begin{equation}
\label{eq:cluster}
\underset{\mathbf{x}_n^{( c )}}{\operatorname{arg \, min}} \; \sum_{n=1}^{N}
|| \mathbf{x}_n^{( c )} - \boldsymbol{\phi}_c ||^2
\end{equation}
where $\mathbf{\Phi} = \{ \boldsymbol{\phi}_1, \dots, \boldsymbol{\phi}_C \}$ are
the centroids of all divisions, initially chosen at random.
The formulated optimization problem can be solved iteratively.
First, labels are assigned to each robot $\mathbf{x}_n$.
Then, each centroid $\boldsymbol{\phi}_c$ is recalculated based
on the robots assigned to that cluster.
This process is repeated until the assigned cluster labels $\mathbf{x}_n^{( c )}$
converge.

The algorithmic implementation of both multimodal multi-robot
structure embedding and multi-robot
division is summarized in Algorithm \ref{alg:division}.

\begin{table}[htp]
    \centering
    \caption{Comparison of Accuracy and Silhouette Scores in Scenario I}
    \label{tab:acc_vs_sil}
    \vspace{-5pt}
    \tabcolsep=0.2cm
    \begin{tabular}{|c|c|c|}
    \hline
    Method & Accuracy & Avg. Silhouette Score \\
    \hline\hline
    GN \cite{newman2004finding} (Spatial) & 39.22\% & -0.012 \\
    GN (Connectivity) & 81.37\% & 0.434  \\
    GN (Hierarchy)    & 62.75\% & 0.354  \\
    LLE \cite{roweis2000nonlinear} (Spatial) & 44.12\% & 0.138  \\
    LLE (Connectivity)           & 44.44\% & 0.082  \\
    LLE (Hierarchy)              & 39.22\% & -0.031 \\
    HOPE \cite{ou2016asymmetric} (Spatial) & 85.62\% & 0.455  \\
    HOPE (Connectivity)          & 50.33\% & 0.018  \\
    HOPE (Hierarchy)             & 39.22\% & 0.167  \\
    HOPE (Concatenated)          & 93.14\% & 0.518  \\
    \hline
    Our Approach                 & \textbf{96.73\%} & \textbf{0.680}  \\
    \hline
    \end{tabular}
\end{table}

%%%%
\section{Experiments} \label{sec:results}
%%%%

\begin{table*}[htbp]
    \centering
    \caption{Silhouette Scores for Scenario I}
    \label{tab:army_silhouette}
    \vspace{-8pt}
    \scriptsize
    Columns beginning with `GN' use the Girvan-Newman algorithm \cite{newman2004finding}.
    Columns beginning with `LLE' use Local Linear Embedding \cite{roweis2000nonlinear}.
    Columns beginning with `HOPE' use High-Order Proximity preserved Embedding \cite{ou2016asymmetric}.
    Columns also note whether the embedding represents the \underline{sp}atial graph,
    the \underline{co}nnectivity graph, the \underline{hi}erarchy graph, or a
    \underline{c}oncatenated \underline{c}ombination.
    Our approach integrates all three graphs.
    $k$ indicates the number of teams for that row.
    \\ \vspace{0.10cm}
    \tabcolsep=0.15cm
    \begin{tabular}{|c|c|c|c|c|c|c|c|c|c|c|c|c|}
    \hline
    Formation & $k$ & GN sp. & GN co. & GN hi. & LLE sp. & LLE co. & LLE hi. & HOPE sp. & HOPE co. & HOPE hi. & HOPE cc. & Our Approach \\
    \hline\hline
    Platoon Column & 3 & 0.001 & 0.434 & 0.293 & 0.206 & 0.053 & -0.113 & 0.542 & 0.101 & 0.225 & 0.341 & \textbf{0.763} \\
    %Platoon Column & 4 & 0.035 & 0.435 & 0.347 & 0.225 & 0.070 & -0.104 & 0.435 & -0.146 & 0.156 & 0.554 & \textbf{0.864} \\
    Platoon Column & 5 & 0.065 & 0.460 & 0.411 & 0.241 & 0.079 & -0.170 & 0.347 & -0.128 & 0.008 & 0.572 & \textbf{0.762} \\
    \hline
    Platoon Vee & 3 & -0.099 & 0.569 & 0.164 & 0.206 & 0.129 & -0.064 & 0.569 & 0.019 & 0.324 & 0.447 & \textbf{0.779} \\
    %Platoon Vee & 4 & -0.092 & 0.480 & 0.170 & 0.177 & 0.212 & -0.080 & 0.513 & -0.006 & 0.297 & 0.665 & \textbf{0.879} \\
    Platoon Vee & 5 & -0.089 & 0.317 & 0.283 & 0.200 & 0.246 & -0.217 & 0.313 & 0.034 & 0.144 & 0.634 & \textbf{0.805} \\
    \hline
    Platoon Wedge & 3 & -0.039 & 0.568 & 0.264 & 0.201 & 0.187 & -0.070 & 0.579 & 0.028 & 0.254 & 0.465 & \textbf{0.785} \\
    %Platoon Wedge & 4 & -0.010 & 0.465 & 0.229 & 0.225 & 0.109 & -0.089 & 0.494 & -0.007 & 0.285 & 0.655 & \textbf{0.882} \\
    Platoon Wedge & 5 & 0.004 & 0.304 & 0.362 & 0.232 & 0.276 & -0.149 & 0.367 & 0.035 & 0.125 & 0.626 & \textbf{0.807} \\
    \hline
    Squad Column & 2 & 0.042 & \textbf{0.466} & \textbf{0.466} & 0.068 & -0.027 & 0.107 & \textbf{0.466} & 0.088 & 0.123 & \textbf{0.466} & 0.396 \\
    Squad Column & 3 & -0.012 & 0.323 & 0.467 & 0.028 & -0.014 & 0.064 & 0.483 & 0.273 & 0.114 & 0.467 & \textbf{0.604} \\
    \hline
    Squad File & 2 & 0.036 & \textbf{0.423} & \textbf{0.423} & -0.024 & 0.000 & 0.114 & \textbf{0.423} & 0.077 & 0.102 & \textbf{0.423} & 0.325 \\
    Squad File & 3 & -0.026 & 0.454 & 0.429 & 0.023 & -0.031 & 0.083 & 0.283 & -0.044 & 0.081 & 0.454 & \textbf{0.530} \\
    \hline
    Squad Line & 2 & 0.054 & \textbf{0.494} & \textbf{0.494} & 0.049 & -0.007 & 0.147 & \textbf{0.494} & -0.015 & 0.147 & \textbf{0.494} & 0.420 \\
    Squad Line & 3 & -0.054 & 0.328 & 0.514 & 0.014 & -0.047 & 0.075 & 0.514 & -0.044 & 0.124 & 0.514 & \textbf{0.651} \\
    %\hline
    %Average & - & -0.012 & 0.434 & 0.354 & 0.138 & 0.082 & -0.031 & 0.455 & 0.018 & 0.167 & 0.518 & \textbf{0.680} \\
    \hline
    \end{tabular}
\end{table*}

To evaluate the performance of the proposed multimodal
graph embedding approach for selection of multi-robot teams,
we conduct experiments using robotic systems at multiple scales.
We evaluate the accuracy of multi-robot team selection and
score the quality of the robot clusters.
We also compare with previous graph-based division techniques.

\subsection{Experimental Setup}

Our approach was experimentally evaluated and validated in
two different scenarios.
First, we evaluate on \emph{Scenario I: Expert-defined Team Formations.}
This experiment is based on a set of expert-defined graphs that
represent relationships in field operations teams at two scales.
Second, we evaluate on
\emph{Scenario II: Simulated Large-Scale Multi-Robot Systems.}
We generate large-scale multi-robot systems
to demonstrate that our method scales beyond the
size of small groups.
These simulated systems do not have ground truth teams, and instead
our identified teams are scored by a clustering metric.
We demonstrate our approach can be deployed in real-world robotic applications
by deploying some of these simulated robotic systems as physical TurtleBot robots.

\begin{figure}[tb]
    \centering
    \includegraphics[width=0.45\textwidth]{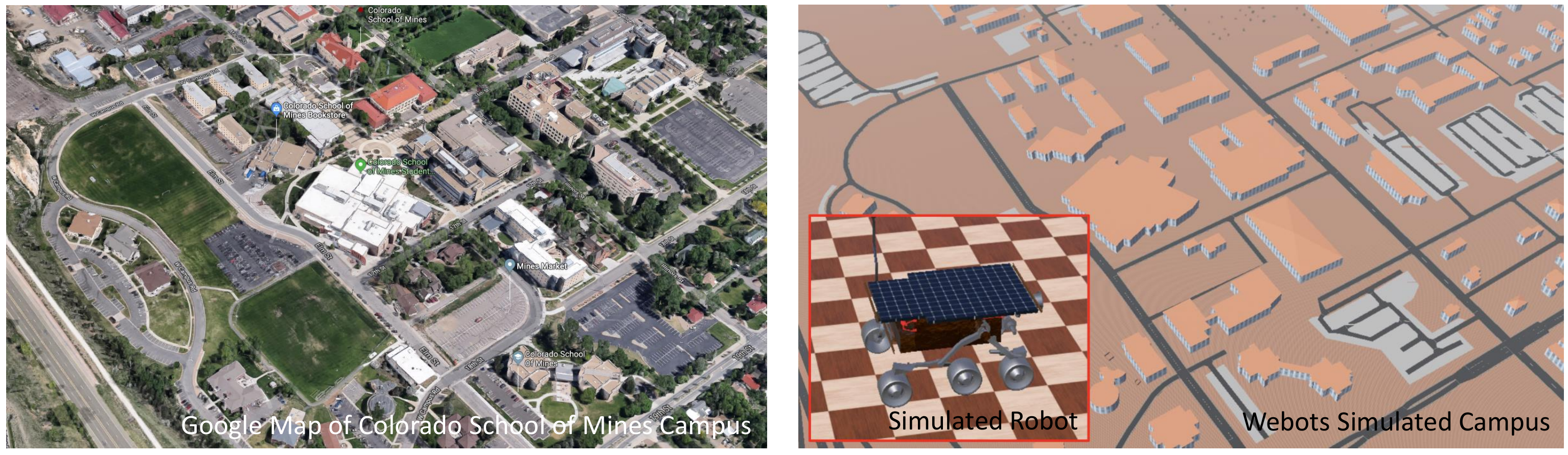}
    \vspace{-8pt}
    \caption{Experimental setups in a robot-assisted urban search and
    rescue scenario used to evaluate our approach in Scenario I and
    Scenario II.
    The left figure depicts the Google satellite map of the real
    campus environment of the Colorado School of Mines.
    The right figure illustrates the simulated environment and
    a simulated robot in the Webots robot simulator.}
    \label{fig:webots_env}
\end{figure}

\begin{figure}
    \centering
    \subfigure[Squad Column]{
        \label{fig:sc}
        \centering
        \includegraphics[height=0.95in]{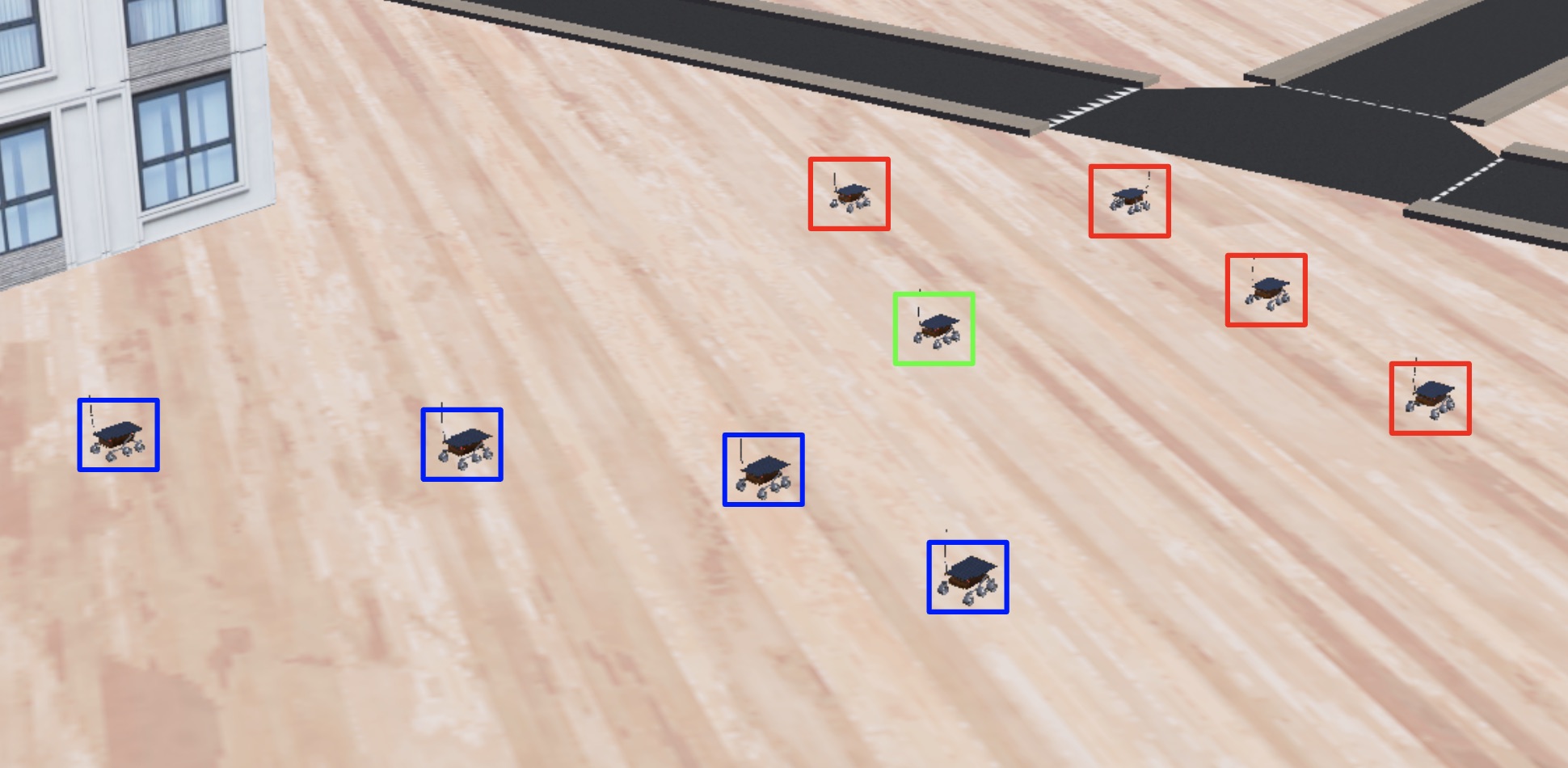}
    }%
    \subfigure[Platoon Column]{
        \label{fig:pc}
        \centering
        \includegraphics[height=0.95in]{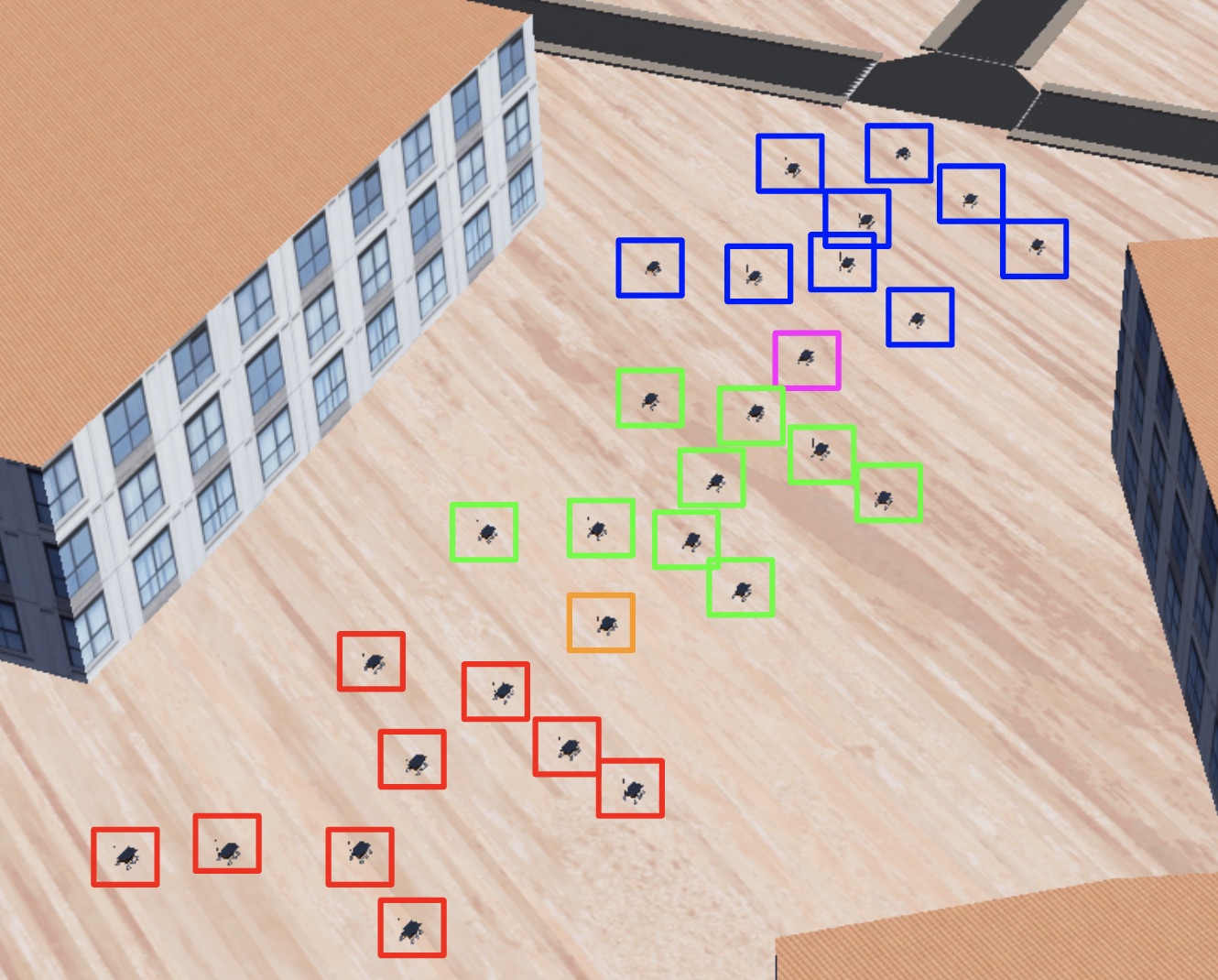}
    }
    \vspace{-8pt}
    \caption{Qualitative results of multi-robot team selection in Scenario I.
    In this scenario, the multi-robot system is
    approaching an intersection where division
    into teams may be necessary to effectively search down branching
    roads or to secure the intersection.
    Figure \ref{fig:sc} displays the \emph{squad column}, with colored
    bounding boxes identifying the correct division into three teams
    \cite{fm3_21_9}.
    Figure \ref{fig:pc} displays the \emph{platoon column}, displayed
    with the correct division into five teams.}
    \label{fig:formations}
\end{figure}

Simulations of the expert-defined teams and the large-scale multi-robot systems were performed
with the Webots robot simulator \cite{webots} in an application of area exploration.
We created a simulated environment of the Colorado School of Mines campus for the experiments.
Figure \ref{fig:webots_env} displays the real campus environment, its simulated Webots map and the simulated robot used.
In the experiments on all scenarios,
three types of graphs are used as input information modalities,
describing spatial positions, known
communication connectivity, and a set hierarchy.
For quantitative evaluation,
we present the clustering accuracy when dividing the expert-defined teams
and utilize
silhouette scores \cite{rousseeuw1987silhouettes} for all robotic divisions.
Silhouette scores rate the quality of a clustering, with values closer
to 1 being better and values closer to -1 being worse.

To validate the superior performance of our approach,
we compare it with previous methods:
(1) \emph{Girvan-Newman} (GN) \cite{newman2004finding}, a single-modality graph community finding algorithm;
(2) \emph{Local Linear Embedding} (LLE)
\cite{roweis2000nonlinear}, a single-modality graph embedding approach;
(3)  \emph{High-Order Proximity preserved Embedding} (HOPE)
\cite{ou2016asymmetric}, a single-modality graph embedding approach;
and (4) \emph{Concatenated Combination HOPE}, which applies the HOPE graph embedding on each individual graph and
concatenates these vectors from three separate graphs into a single representation.

\begin{figure}
    \centering
    \includegraphics[width=0.44\textwidth]{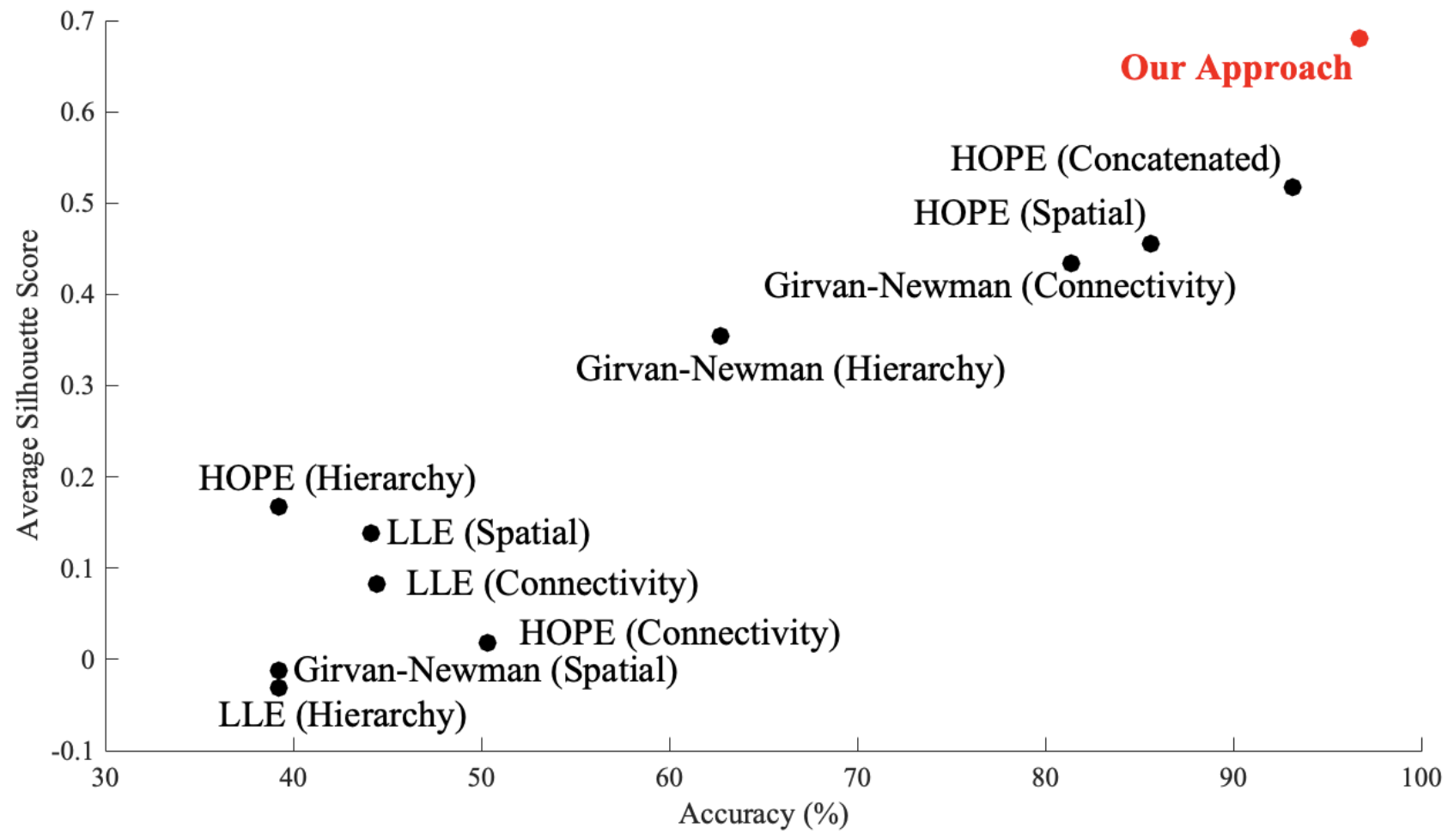}
    \vspace{-8pt}
    \caption{Quantitative experimental results on the relationship of accuracy and silhouette
    score, which shows a linear relationship between higher
    accuracy and higher silhouette scores.
    Our multimodal graph embedding approach achieves both the highest accuracy and the highest silhouette score, which outperforms the previous methods.}
    \label{fig:acc_vs_sil}
\end{figure}

\subsection{Results on Expert-Defined Team Formations (Scenario I)}

We first evaluate our approach on the
expert-defined team formations known as
the \emph{platoon column}, \emph{platoon wedge}, and \emph{platoon vee}
and the \emph{squad column}, \emph{squad file}, and \emph{squad line},
based on the field operations teaming protocol in \cite{fm3_21_9}.
This protocol contains correct, expert-defined
sub-divisions for these formations.
Platoon formations incorporate three squads and two separate
leadership agents.
Squad formations incorporate two teams and one separate leadership
agent.
Figure \ref{fig:formations} displays the \emph{squad column}
and \emph{platoon column} in the Webots simulator, with correct
sub-divisions labeled.

\begin{figure*}
    \centering
    \subfigure[Squad Formations]{
        \label{fig:noise_squad}
        \centering
        \includegraphics[width=0.231\textwidth]{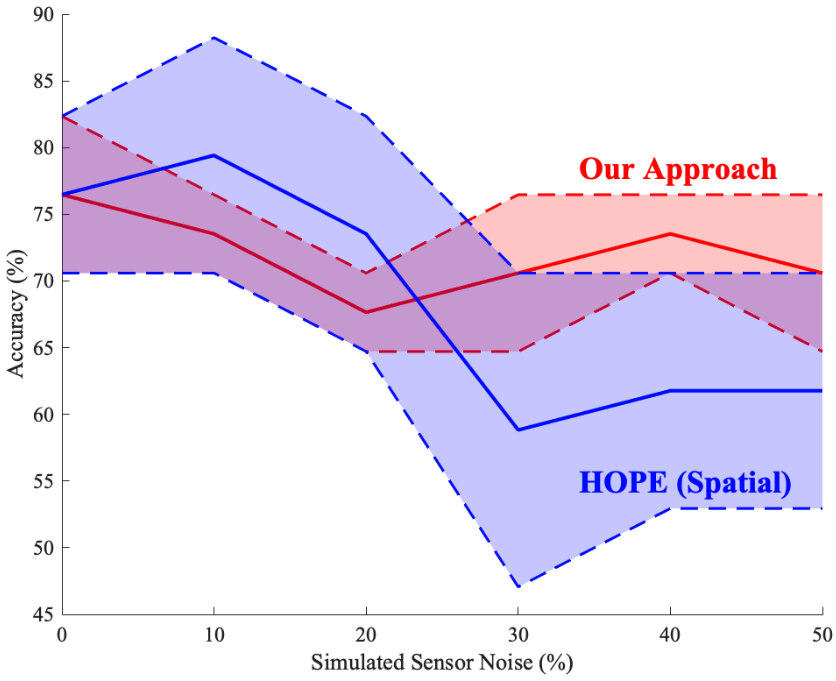}
    }%
    \subfigure[Platoon Formations]{
        \label{fig:noise_platoon}
        \centering
        \includegraphics[width=0.232\textwidth]{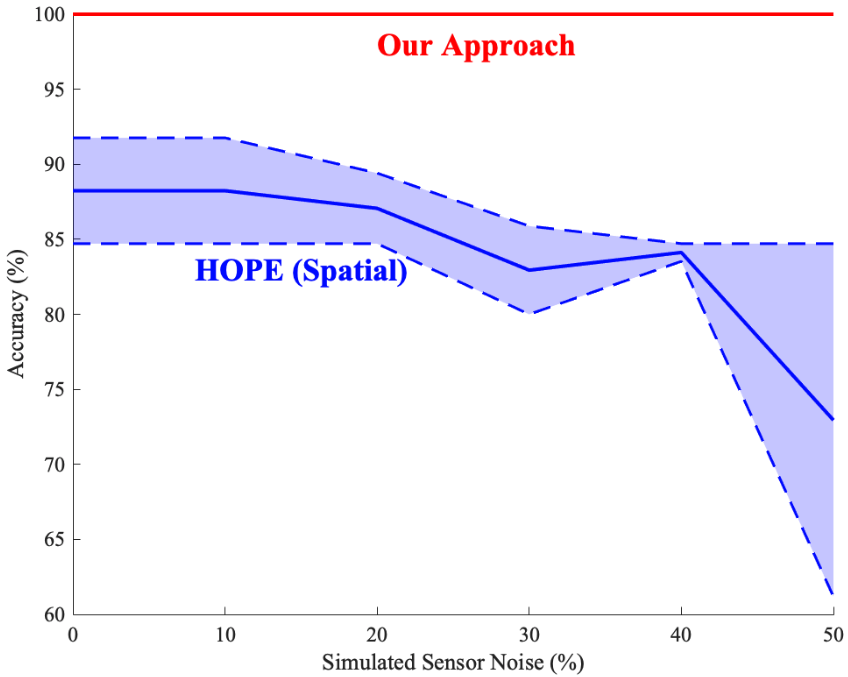}
    }%
    \subfigure[Squad Formations]{
        \label{fig:noise_sil_squad}
        \centering
        \includegraphics[width=0.232\textwidth]{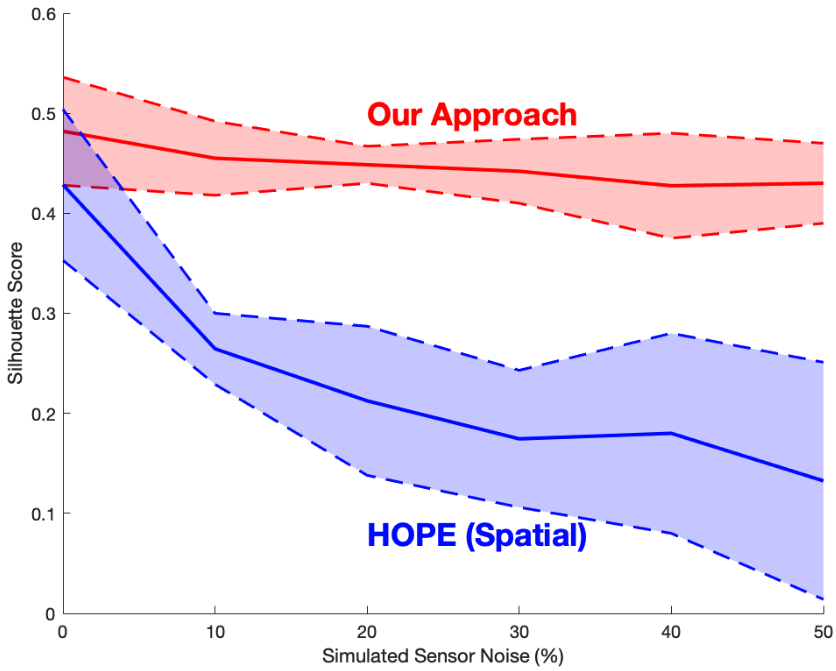}
    }%
    \subfigure[Platoon Formations]{
        \label{fig:noise_sil_platoon}
        \centering
        \includegraphics[width=0.232\textwidth]{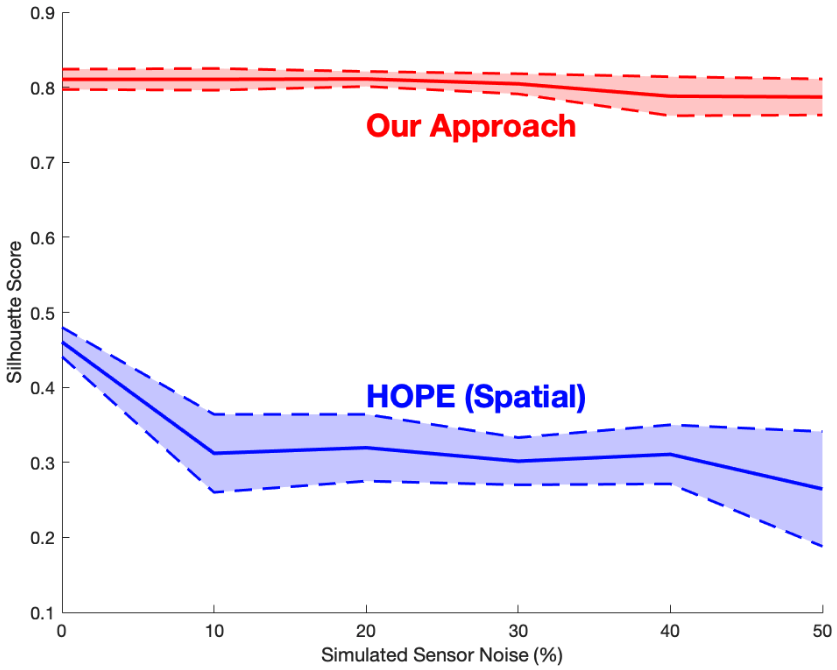}
    }
    \vspace{-8pt}
    \caption{Experimental results showing the effect
    of simulated sensor noise when determining the team assignments of robots
    on both clustering accuracy and silhouette score.
    Figure \ref{fig:noise_squad} and Figure \ref{fig:noise_platoon}
    show the effects on clustering accuracy on the squad and
    platoon formations respectively, compared to the HOPE embedding
    method \cite{ou2016asymmetric}.
    Figure \ref{fig:noise_sil_squad} and Figure \ref{fig:noise_sil_platoon}
    show the effects on the silhouette score metric for the
    squad and platoon formations, again compared to to the HOPE
    embedding method based solely on spatial information.
    }
    \label{fig:noise}
\end{figure*}

The spatial relationships, communication
capabilities, and structured hierarchy of these formations are
defined by the field operations teaming protocol.
We encode each of these relationship modalities as a separate graph in order
to compare our approach to previous methods.
Table \ref{tab:acc_vs_sil} reports the clustering accuracy for our
approach versus these baseline approaches.
Out of a possible 306 agents in the six different formations, our
approach clusters $96.73\%$ of the agents correctly.
The second best is the concatenated combination of HOPE embeddings,
clustering $93.14\%$ of agents correctly, showing that extending
existing graph embedding methods to leverage multiple information
modalities can significantly improve their performance from using
a single modality.

Table \ref{tab:army_silhouette} displays the silhouette scores for
each sub-division of each formation.
Our approach achieves the highest score, with an average silhouette
score of $0.680$.
Again, second highest is the concatenated combination of HOPE
embeddings, scoring $0.518$.
We note that our approach achieves its best results on the platoon
formations, which contain over three times as many agents as the squad formations.
This suggests that our approach's performance will extend to
larger multi-robot systems.
There also exists a linear relationship between clustering accuracy and
silhouette scores, visualized in Figure \ref{fig:acc_vs_sil}.
This validates the silhouette score as a metric for dividing the simulated
robotic systems in Scenario II, where the ground truth divisions are unknown.

For these expert-defined teams, we also evaluated the effect of simulated
sensor noise, where robots no longer have exact knowledge of where
their robotic teammates are.
Figure \ref{fig:noise} shows both the effect on clustering accuracy of adding up to $50\%$ error
into the distances between robots and the effect
of this noise on silhouette score.
Our approach, able to utilize information from other modalities,
is robust to this error, declining in accuracy only slightly for
the smaller squad formations and maintaining $100\%$ accuracy
on the larger platoon formations.
Our approach is similarly consistent in maintaining a high silhouette
score despite the simulated sensor noise.
The HOPE embedding, based on the spatial graph, is affected significantly.
On the squad formations, this method declines from an average
accuracy of $76.47\%$ with no sensor noise to $61.77\%$ with $50\%$ noise.
A similar decline in performance occurs with the platoon formations,
from $88.24\%$ to $72.95\%$ accuracy.
The HOPE embedding also produces clusters with lower silhouette scores
as the simulated sensor noise increases.

\begin{figure}[]
    \centering
    \includegraphics[width=0.42\textwidth]{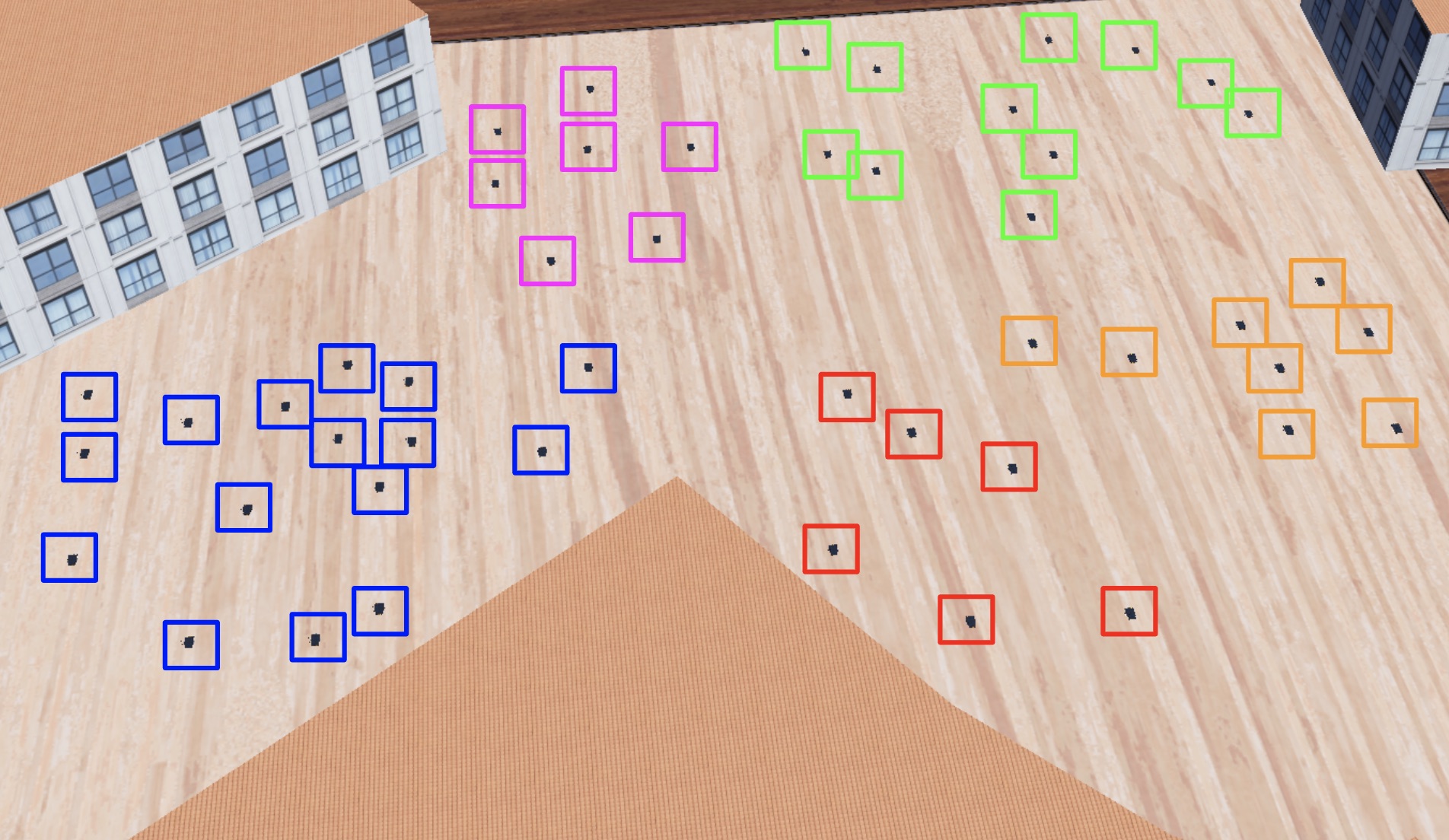}
    \vspace{-8pt}
    \caption{Qualitative results on multi-robot team selection over a
    system of 50 simulated robots, with bounding boxes
    indicating the five teams identified by our approach.}
    \label{fig:generated}
    \vspace{-5pt}
\end{figure}

\begin{table}[]
    \centering
    \caption{Silhouette Scores for Scenario II}
    \label{tab:generated_silhouette}
    \vspace{-8pt}
    \tabcolsep=0.02cm
    \begin{tabular}{|c|c|c|c|c|c|c|c|c|c|}
    \hline
    Bots & $k$ & LLE s. & LLE c. & LLE h. & HOPE s. & HOPE c. & HOPE h. & HOPE cc. & Ours \\
    \hline\hline
    10 & 3 & 0.19 & 0.17 & 0.02 & 0.34 & 0.44 & 0.19 & 0.35 & \textbf{0.68} \\
    10 & 4 & 0.22 & 0.15 & 0.02 & 0.34 & 0.39 & 0.17 & 0.39 & \textbf{0.66} \\
    10 & 5 & 0.23 & 0.10 & 0.03 & 0.32 & 0.37 & 0.07 & 0.40 & \textbf{0.58} \\
    \hline
    20 & 3 & 0.34 & 0.30 & -0.02 & 0.44 & 0.35 & 0.27 & 0.44 & \textbf{0.70} \\
    20 & 4 & 0.41 & 0.36 & -0.01 & 0.45 & 0.44 & 0.21 & 0.49 & \textbf{0.68} \\
    20 & 5 & 0.45 & 0.32 & -0.01 & 0.45 & 0.34 & 0.15 & 0.52 & \textbf{0.65} \\
    \hline
    30 & 3 & 0.36 & 0.23 & -0.01 & 0.43 & 0.19 & 0.28 & 0.44 & \textbf{0.69} \\
    30 & 4 & 0.45 & 0.29 & -0.02 & 0.46 & 0.30 & 0.22 & 0.48 & \textbf{0.70} \\
    30 & 5 & 0.47 & 0.27 & -0.01 & 0.47 & 0.22 & 0.19 & 0.51 & \textbf{0.67} \\
    \hline
    40 & 3 & 0.36 & 0.17 & -0.00 & 0.44 & 0.10 & 0.27 & 0.44 & \textbf{0.68} \\
    40 & 4 & 0.45 & 0.20 & -0.03 & 0.47 & 0.12 & 0.23 & 0.48 & \textbf{0.71} \\
    40 & 5 & 0.49 & 0.20 & -0.02 & 0.46 & 0.11 & 0.20 & 0.49 & \textbf{0.66} \\
    \hline
    50 & 3 & 0.36 & 0.12 & -0.02 & 0.42 & 0.03 & 0.27 & 0.42 & \textbf{0.70} \\
    50 & 4 & 0.45 & 0.13 & -0.03 & 0.46 & 0.02 & 0.23 & 0.47 & \textbf{0.71} \\
    50 & 5 & 0.48 & 0.15 & -0.03 & 0.46 & 0.03 & 0.21 & 0.49 & \textbf{0.66} \\
    %\hline
    %Avg. & - & 0.38 & 0.21 & -0.01 & 0.43 & 0.23 & 0.21 & 0.45 & \textbf{0.68} \\
    \hline
    \end{tabular}
\end{table}

\subsection{Results on Simulated Large-scale Multi-Robot Systems (Scenario II)}

To evaluate the effectiveness of our approach on a larger scale,
we simulated larger multi-robot systems in Webots.
These large-scale systems have the same relationship modalities
as the expert-defined teams.
Spatial relationships are calculated from their simulated physical
locations, communication capabilities are based on each simulated
robot being capable of short-range, line-of-sight communication
to nearby robots, and hierarchical relationships were defined
based on heuristics from field operations teams (i.e., robots near
the center of the overall system are higher in the hierarchy than
robots on the perimeter).
Figure \ref{fig:generated} displays a simulated system of 50
robots in Webots.

We repeatedly generated these large multi-robot systems
and performed team selection based on each evaluated approach.
Table \ref{tab:generated_silhouette} shows results for simulated
systems of 10, 20, 30, 40 and 50 robots divided into three, four and five teams, with
comparisons to existing graph embedding approaches.
Each combination of size and number of clusters was run 100
times.
As seen in the experimental results in Table \ref{tab:generated_silhouette},
our approach achieves the highest silhouette score of $0.676$.
This is consistent with our score of $0.680$ on the expert-defined teams, where
our approach achieved the highest clustering accuracy, suggesting
that our identified sub-divisions of large-scale simulated multi-robot systems
outperform the divisions identified by other methods.

\begin{figure}[tbh]
    \centering
    \subfigure[]{
        \centering
        \includegraphics[height=0.58in]{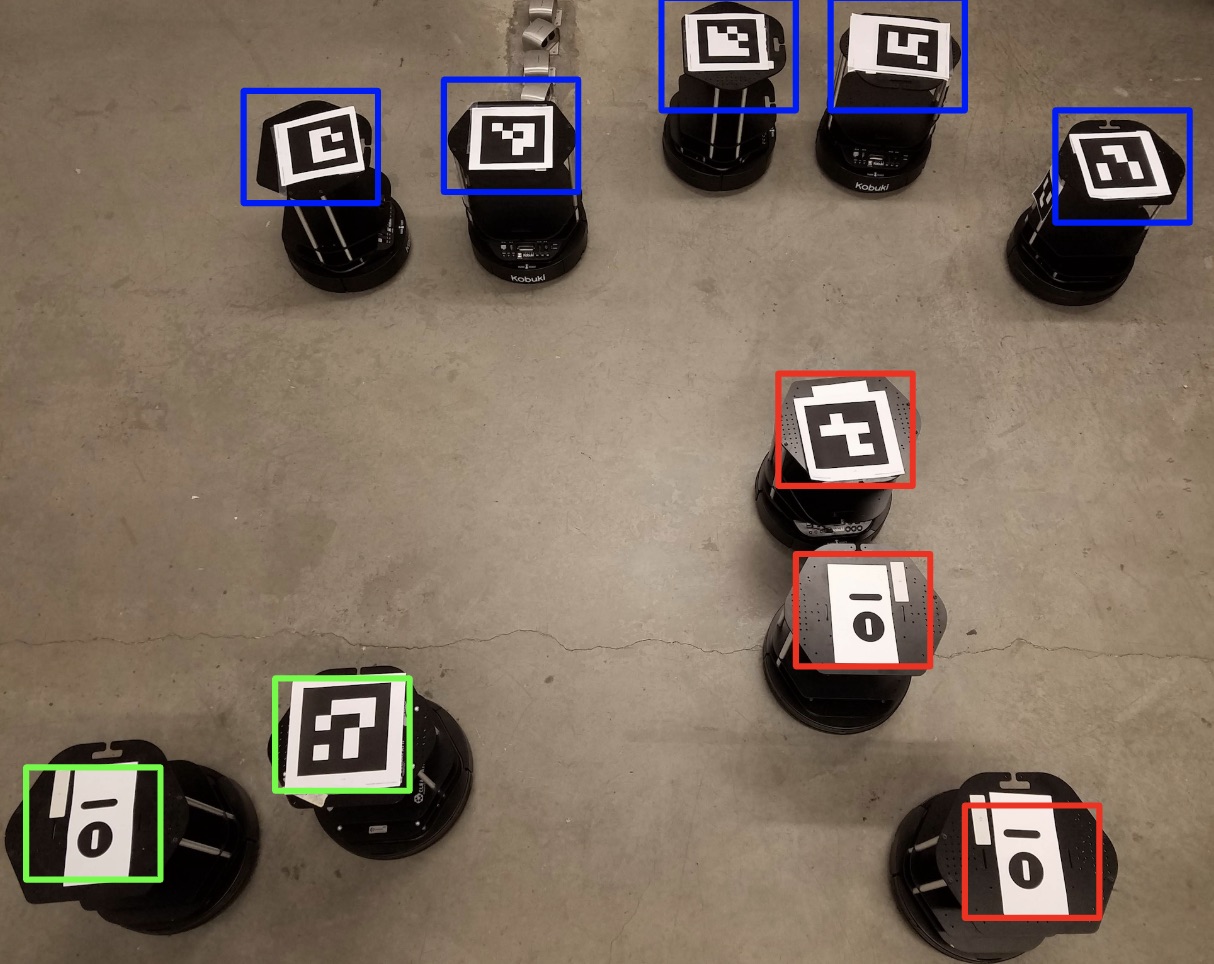}
    }%
    \subfigure[]{
        \centering
        \includegraphics[height=0.58in]{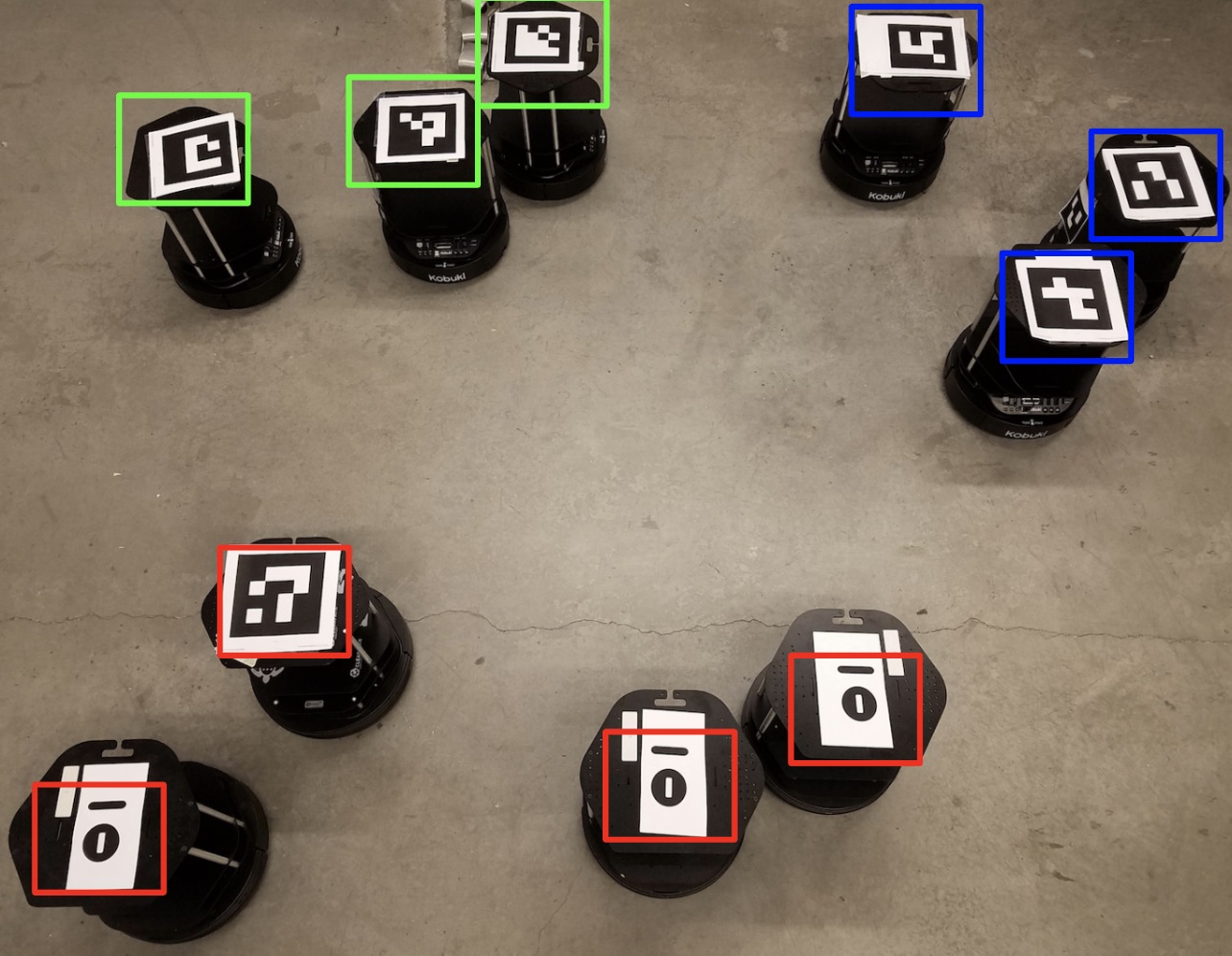}
    }%
    \subfigure[]{
        \centering
        \includegraphics[height=0.58in]{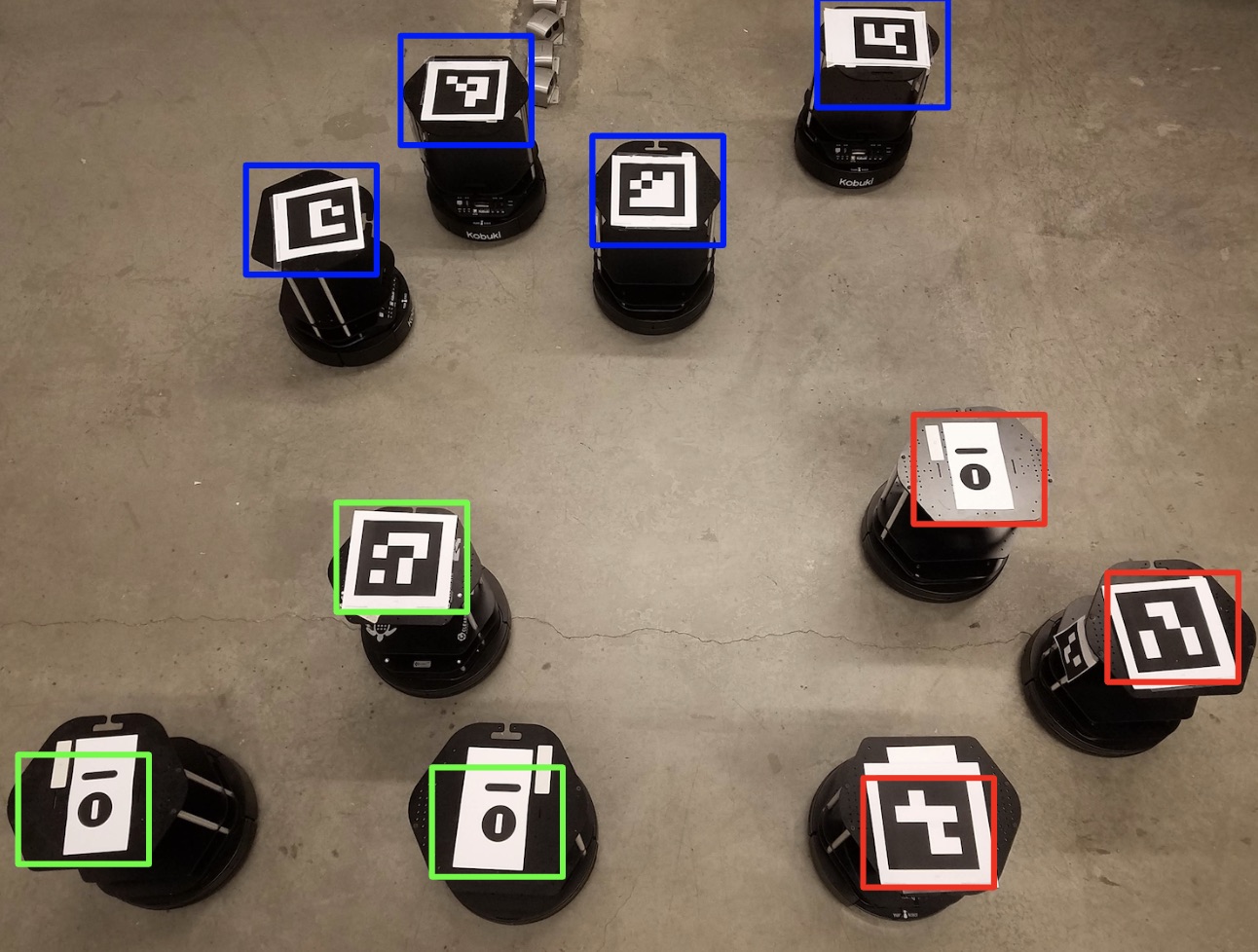}
    }%
    \subfigure[]{
        \centering
        \includegraphics[height=0.58in]{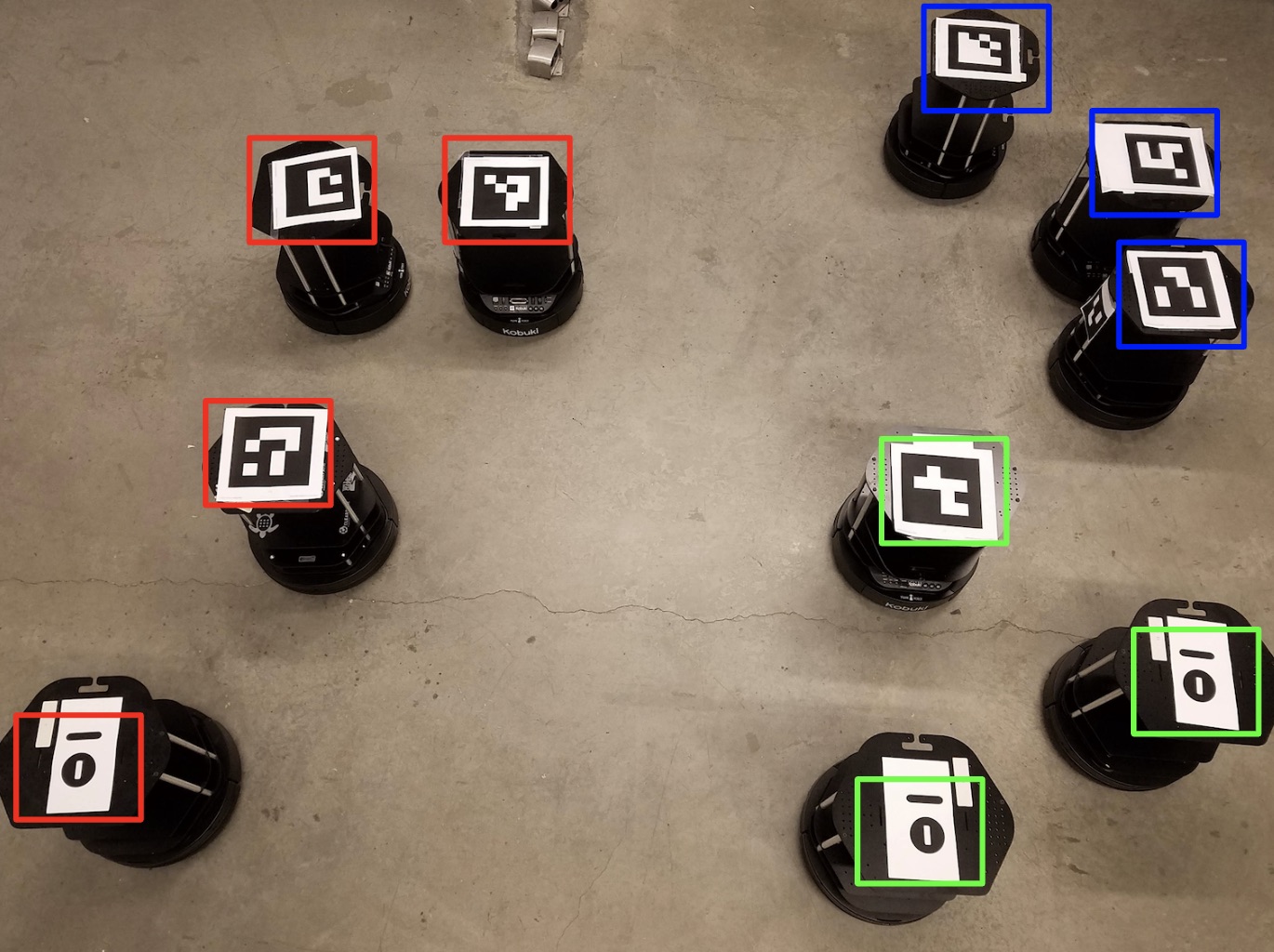}
    }
    \vspace{-12pt}
    \caption{Overhead views of four of the Turtlebot multi-robot systems.
    Bounding boxes indicate which team each Turtlebot is assigned to.
    }\label{fig:realworld}
\end{figure}

To demonstrate the ability and potential of our approach to work on physical
robots in the real world, we implemented systems of
10 physical Turtlebot
robots in a proof-of-concept case study.
We labeled these 10 Turtlebots with tracking tags, and utilized an overhead camera
to track the robots and
construct their spatial relationship graph from their positions.
Communication and hierarchy graphs were defined as they were in the Webots
simulator.
%Using these three graphs as the input information modalities,
We applied our proposed multimodal graph embedding approach to identify teams within
the Turtlebot system, with the objective to divide the physical robot system
into three teams.
Figure \ref{fig:realworld} illustrates four executions of this as a case study, with bounding boxes identifying the positions and team labelings.
%within the Turtlebot system.

%%%%
\section{Conclusion} \label{sec:conclusion}
%%%%

In this paper, we introduce a novel approach to representing multi-robot
structure in order to divide a multi-robot
system into teams.
The proposed approach represents robotic relationships as directed graphs,
and constructs a vector representation from multiple graphs through
a novel multimodal graph embedding method.
Our approach is able to integrate multiple information modalities to describe
and divide a multi-robot system,
allowing our approach to create teams that are more comprehensive and effective.
We demonstrate the effectiveness of our approach over expert-defined team
formations
%showing that it outperforms both graph division approaches and
%other graph embedding techniques.
%Additionally,
and evaluate our approach on
large-scale simulated multi-robot systems and on physical robot teams.
%to demonstrate the superior performance of our approach and
%its potential to be deployed on real-world robot systems.

%%%% BIBLIOGRAPHY
% \clearpage
\bibliographystyle{ieeetr}
\bibliography{references}

\end{document}